\newtheorem{definition}{Definition}
\newtheorem{proposition}{Proposition}
\newcommand{\A}[1]{\textbf{A#1}}
\renewcommand\Re{\mathbb{R}}
\begin{document}
	\title{A Baseline for Shapley Values in MLPs: from Missingness to Neutrality}
	
	\author{Cosimo Izzo, Aldo Lipani, Ramin Okhrati, Francesca Medda
		%
		%
		\vspace{.3cm}\\
		%
		University College London, London, United Kingdom
	}
	
	\maketitle
	
	\begin{abstract}
		Deep neural networks have gained momentum based on their accuracy, but their interpretability is often criticised. As a result, they are labelled as black boxes. In response, several methods have been proposed in the literature to explain their predictions. Among the explanatory methods, Shapley values is a feature attribution method favoured for its robust theoretical foundation. However, the analysis of feature attributions using Shapley values requires choosing a baseline that represents the concept of missingness. An arbitrary choice of baseline could negatively impact the explanatory power of the method and possibly lead to incorrect interpretations. In this paper, we present a method for choosing a baseline according to a neutrality value: as a parameter selected by decision-makers, the point at which their choices are determined by the model predictions being either above or below it. Hence, the proposed baseline is set based on a parameter that depends on the actual use of the model. This procedure stands in contrast to how other baselines are set, i.e. without accounting for how the model is used. We empirically validate our choice of baseline in the context of binary classification tasks, using two datasets: a synthetic dataset and a dataset derived from the financial domain.
	\end{abstract}
	
	\section{Introduction and Background}
In disciplines such as economics, finance and healthcare, the ability to explain predictions is as important as having a model that performs well \cite{tsang2017detecting, goodman2017european}.
%
A solution to the problem is provided by feature attribution methods, which are used to indicate how much each feature contributes to the prediction for a given example.
A theoretically grounded feature attribution method 
is provided by Shapley values and their approximations \cite{lundberg2017unified}. 
When explaining a prediction with Shapley values, we need to perform two steps. First, we define a \textit{baseline}. Then, we compute the Shapley values for a given example.
While most works focused on the latter, the former has not been sufficiently explored, despite the implications that the baseline definition carries to correctly interpret Shapley values.


For a neural network $G_{\theta}$ with parameters $\theta$, and $n$ input features, the contribution of the feature $j$ calculated according to the Shapley value for the input $x=[x_1, x_2, \dots, x_n]$ 
is given by:
\begin{equation}
\sum_{S \subseteq P \setminus \{j\}} \frac{|S|!(|P|-|S|-1)!}{|P|!}\left(
G_{\theta}(\tilde{x}_{S \cup \{j\}}) - 
G_{\theta}(\tilde{x}_{S})\right),
\end{equation}
where $P$ is the collection of all feature indexes, the element $i$ of the vector $\tilde{x}_S$ is given by $\tilde{x}_{S,i}=x_i \mathbbm{1}_{\{i\in S\}} + b_i \mathbbm{1}_{\{i \not\in S\}}$ (similarly for $\tilde{x}_{S\cup \{j\}}$), and $b_i$ is the baseline value for the feature $i$. The \textit{baseline} models the missingness of a feature, i.e., it replaces that feature when it is absent. As it is argued by \citet{sturmfels2020visualizing}, the concept of missingness is not well defined and explored in machine learning. 
The standard practice in setting up the baseline is to assign a vector of \textbf{zeros} \cite{zeil2014,Sundararajan2017,shrikumar2017learning,Ancona2019} for all features, which coincides with the \textbf{average} vector baseline when features are standardised. However, this choice
could be misleading. For example, in a classification task, with binary features
representing the presence or absence of an entity, given an
example and its prediction value, such a baseline would always
measure a null contribution for each feature with value
equal to zero. A way to address this zero-baseline insensitivity problem 
is to use the maximum distance baseline (\textbf{mdb}) \cite{sturmfels2020visualizing}. 
This baseline consists in taking the furthest observation from the current one in an $L^1$ norm. This approach unequivocally creates incoherent justifications for the interpretations provided by the model due to the correlation of the baseline with the underlying dataset.
Alternatively, one can consider a sample of baselines and average the attributions computed over each baseline 
\cite{smilkov2017smoothgrad,lundberg2017unified,erion2019learning,sundararajan2019many}, for example by using the underlying empirical distributions of the dataset (\textbf{p}$_{X}$) \cite{lundberg2017unified,sundararajan2019many,sturmfels2020visualizing}. However, the \textbf{p}$_{X}$ baseline increases the computational cost of estimating feature attributions linearly with respect to the number of draws. Moreover, this choice of baseline does not allow the setting of a reference value on the model output when computing the Shapley values. This is important when decisions are taken with respect to a specific value of the model.

%


The evaluation of explainability methods from a quantitative perspective is difficult due to the lack of a clear definition of what is a correct explanation \cite{Sundararajan2017,Ancona2019}.
Many extrinsic evaluation of explainability power have been developed. 
The intuition behind these methods is that if the feature attribution method correctly identifies the most important feature, then when this feature is removed, the model performance should decrease more (or the prediction value should deviate more) than when a less important feature is removed \cite{hooker2019benchmark, Anconaetal2017}.
A limitation of these evaluation methods is that since a feature is removed once at the time, these measures may be mislead by potential feature correlations. In this paper, to avoid this issue, we also perform an analysis using a synthetic dataset where features are generated independently.
Further, we evaluate the explainability power across two dimensions. First, along the feature importance dimension by means of ROAR \cite{hooker2019benchmark}. ROAR consists of, given a model, to first identify the most important features on a per example basis, then retrain the model on a dataset where the these features are replaced with their average values, and measure the difference in performance between the two models.
Second, along the information gain dimension when removing features by means of \textit{absolute logits} ($|\log(G_{\theta}(x)/(1-G_{\theta}(x)))|$ where $G_{\theta}$ is a probabilistic classifier and $x$ an example). 
%
The choice of this measure is motivated also by information theory. Indeed, standard logits can be seen as the difference between two Shannon information. Since in a binary classification task the Shannon information differential represents the confidence of the model in classifying the instance as positive or as negative, we take the absolute value of the standard logits to measure the variation in Shannon information in both directions. As this measure decreases (increases), the Shannon information differential decreases (increases). Furthermore, when the Shannon information differential is 0 we can argue that the model becomes uninformative. Indeed, by doing so we are testing whether there is convergence to a meaningful value that represents missingness for the user of the model.
	\label{sec:introduction}
%
	
%
	\section{The Neutral Baseline}
	\label{sec:approach}
	
In this section, we theoretically justify the existence of a baseline according to well defined concepts of neutrality value and fair baselines. Following \citet{Bach2015}, we argue that the baseline should live on the decision boundary of the classifier.
\begin{definition}[Neutrality Value]
\label{def:neutrality_value}
Given a model prediction $\hat{y}$ and a decision maker, we say that the value $\alpha$ is neutral if the decision maker's choice is determined by the value of $\hat{y}$ being either below or above $\alpha$.
\end{definition}
Generally, the concept of missingness is domain specific. However, every time we are faced with a decision boundary and a model, a natural choice is to consider missingness as missing information from the model to the user to take a choice, i.e.: when the model is at the neutrality value. The idea is that this neutrality value can lead to a point in the input domain that could be used as a baseline. 
However, given a neutrality value and a single-layer perceptron (SLP) with more than one continuous input feature, there are an infinite number of possible combinations of such inputs that lead to the same neutral output. 
Nevertheless, it is possible to narrow down the set of candidates by being fair in representing each feature in its input space, and given its relation to model $G_{\theta}(.)$. This ensures 
that Shapley values are not biased by distributional differences. We formalise this by introducing the concept of \textit{fair baselines}:
%
%
\begin{definition}[Space of Fair Baselines]
	\label{def:space_baselines}
	Consider a dataset in $\Re^k$, $k\geq1$, generated by a distribution. The set of fair baselines for a monotonic model $G_{\theta}(.)$ 
	is given by:
	$\tilde{B}=\{x^p \in \Re^k : x^{p}_j = C^{-1}_{j}(\mathbbm{1}_{\theta_j>0}\cdot p+\mathbbm{1}_{\theta_j \leq 0}\cdot (1-p)), \, p \in [0,1], j=1,2,\dots,k\}$,
	where $C^{-1}_{j}$ is the inverse marginal CDF of $x_j \; \forall j$.  
\end{definition}

Based on the two definitions, neutrality value and space of fair baselines, in what follows we demonstrate the existence of a fair baseline that when given to a SLP returns the neutrality value. Before doing this, we need to state the following two assumptions: \A1. All activation functions are monotonic and continuous. \A2. All marginal cumulative distribution functions (CDFs) of the joint CDF of the input features are bijective and continuous.
Using Definitions \ref{def:neutrality_value} and \ref{def:space_baselines}, and Assumptions \A1 and \A2, the following proposition guarantees 
the existence of a neutral and fair baseline for SLPs:\footnote{It can be proved by contradiction that, if monotonicity in \A1 is replaced by strict monotonicity, the solution becomes unique.}
\begin{proposition}
\label{pro:baseline_for_slps}
Given an SLP ($G_{\theta}$) satisfying \A1, a dataset satisfying \A2, and a neutrality value $\alpha$ in the image of $G_{\theta}$, then there exists at least a fair baseline $x$ such that $G_{\theta}(x)=\alpha$.
\end{proposition}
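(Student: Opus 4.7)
The plan is to view the fair baselines as a one-parameter family $\{x^{p}\}_{p\in[0,1]}$ parameterised by $p$ and to study the scalar function $f(p):=G_{\theta}(x^{p})$. Writing the SLP as $G_{\theta}(x)=\sigma(\theta^{\top}x+b)$, where $\sigma$ denotes its (scalar) output activation, the aim is to show that $f$ is continuous and monotonic on $[0,1]$ and that its range covers the image of $G_{\theta}$; an application of the intermediate value theorem then delivers a $p^{\star}$ with $f(p^{\star})=\alpha$, so that $x=x^{p^{\star}}$ is the required fair baseline.

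First I would verify that $p\mapsto x^{p}$ is continuous componentwise: by \A2 each marginal inverse CDF $C_{j}^{-1}$ is continuous, and its argument $\mathbbm{1}_{\theta_{j}>0}\cdot p+\mathbbm{1}_{\theta_{j}\leq 0}\cdot(1-p)$ is affine in $p$. Hence the pre-activation $z(p):=\theta^{\top}x^{p}+b$ is continuous, and by \A1 so is $f=\sigma\circ z$. Monotonicity reduces to that of $z$: for $\theta_{j}>0$ the term $\theta_{j}\,C_{j}^{-1}(p)$ is non-decreasing in $p$ because inverse CDFs are non-decreasing; for $\theta_{j}<0$ the term $\theta_{j}\,C_{j}^{-1}(1-p)$ is also non-decreasing, being a negative weight times a function non-increasing in $p$; for $\theta_{j}=0$ the contribution is constant. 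Summing gives that $z$ is monotonic on $[0,1]$, and composing with the monotonic $\sigma$ preserves this, so $f$ itself is monotonic.

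The main obstacle is showing that the range of $f$ actually contains $\alpha$. The key observation is that at $p=0$ the coordinate $x^{p}_{j}$ is driven to the lower endpoint of the support of $x_{j}$ whenever $\theta_{j}>0$ and to the upper endpoint otherwise, so $z(0)$ attains the infimum of $z$ over the input hyper-rectangle $\prod_{j}[C_{j}^{-1}(0),C_{j}^{-1}(1)]$; symmetrically, $z(1)$ attains its supremum. Since any valid input lies in this hyper-rectangle and $z$ is affine, the image of $G_{\theta}$ is contained in $[\,\sigma(z(0)),\sigma(z(1))\,]$ (or its reverse if $\sigma$ is decreasing). As $\alpha$ is assumed to lie in that image, the IVT applied to the continuous, monotonic $f$ on $[0,1]$ yields the required $p^{\star}$, and $x^{p^{\star}}$ is a fair baseline by construction. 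A subtlety I would flag explicitly is unbounded support: if some $C_{j}^{-1}(0)$ or $C_{j}^{-1}(1)$ is infinite, the same argument is carried out on the open interval $(0,1)$ using one-sided limits of $f$ at the endpoints, and $\alpha\in\mathrm{Im}(G_{\theta})$ together with continuity still guarantees an interior solution.
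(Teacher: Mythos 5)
Your proposal is correct and follows essentially the same route as the paper's proof: evaluate $G_\theta$ at the extreme fair baselines $x^{0}$ and $x^{1}$, argue that these bracket the image of the SLP, and apply the intermediate value theorem to the continuous map $p\mapsto G_\theta(x^{p})$. The extra observations you add (monotonicity of $f$, the case of a decreasing output activation, and the unbounded-support caveat at $p=0,1$) are sound refinements of details the paper leaves implicit, but they do not change the argument.
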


\begin{proof}
	We need to prove that $\alpha\in G_{\theta}(\tilde{B})$ where $G_{\theta}(\tilde{B})$ is the image of $\tilde B$ under $G_\theta$. Suppose that $I$ is the image of the SLP. We show that $I \subseteq G_{\theta}(\tilde{B})$ which proves the result, since $\alpha \in I$.
	We start by showing that $\text{inf }G_{\theta}(\tilde{B}) \leq \text{inf }I$ and that $\text{sup }G_{\theta}(\tilde{B}) \geq \text{sup }I$. Consider vector $x^0 \in \tilde{B}$ defined by $x^0 = \{x^{0}_j = C^{-1}_{j}(\mathbbm{1}_{\theta_j\leq0}), \text{ for all } j=1,2,\dots,k\}$. So  elements of $x^0$ are the smallest possible if the coefficients are positive, and the largest possible when they are negative. From Assumption \textbf{A1}, it follows that $G_{\theta}(x^0)$ is the smallest value that the SLP can take. Hence, $G_{\theta}(x^0)\leq \text{inf }I$. 
	Let us now take the vector $x^1 \in \tilde{B}$ which is defined by: $x^1 = \{x^{1}_j = C^{-1}_{j}(\mathbbm{1}_{\theta_j>0}), \; \text{ for all } j=1,2,\dots,k\}$. So  elements of $x^1$ are the largest possible if the  coefficients are positive, and the smallest possible when they are negative. From assumption \textbf{A1}, it follows that $G_{\theta}(x^1)$ is the largest value that the SLP can take. Hence, $G_{\theta}(x^1) \geq \text{sup }I$. 
	Suppose that $\alpha$ is in the image of the SLP. Define function $h:[0,1]\rightarrow G(\tilde B)$ by $h(p)=G_\theta(x^p)=G(\theta\cdot (x^p)^\top)$ where $x^p\in \tilde B$, i.e, $x^{p}_j = C^{-1}_{j}(\mathbbm{1}_{\theta_j>0}\cdot p+\mathbbm{1}_{\theta_j\leq0}\cdot (1-p))$, $j=1,2,\dots,k$. From the above argument, we have that $h(0)=G_\theta(x^0)\leq \alpha \leq G_\theta(x^1)=h(1)$. Since $G$ and $C^{-1}$ are continuous functions by \textbf{A1} and \textbf{A2}, $h$ is also continuous. By the intermediate value theorem, there is a $p^*\in[0,1]$ such that $h(p^*)=\alpha$, which means that $G_\theta(x^{p^*})=\alpha$. 
\end{proof}

This proof suggests a way to find one neutral and fair baseline for a SLP. An algorithm using empirical CDFs instead of theoretical ones, 
requires as inputs an SLP ($G_{\theta}$), a neutrality value ($\alpha$), a quantile function 
for each dimension of input features, 
a granularity level $\delta > 0$, and a tolerance level $\epsilon>0$. $\delta$ and $\epsilon$ control the speed of search and the margin of error in finding a baseline such that $|G_{\theta}(x) - \alpha| < \epsilon$.
This algorithm starts the search from the lowest output value, which is when $p = 0$, 
and it stops when it reaches a point which is close enough to $\alpha$. 
This is possible because using the parameters of the SLP we can restrict and define an order in function of $p$ for the set of fair baselines. This allows us to test these baselines from the smallest to the largest SLP value.

Finding a baseline for MLPs is more complicated, because there is no easy way to order the baselines in function of $p$ as in the SLP case, unless the MLP is monotonic in each of the features.
Nevertheless, we observe that a MLP with $L$ layers can be rewritten as a function of $\sum_{l=2}^{L} \prod_{l'=l}^{L} k_{l'}$ SLPs. This is done by replicating every node at layer $l$, $k_{l+1}$ times, i.e., the number of nodes at layer $l+1$, 
and considering every node in the layer $l+1$ as a SLP with input given by the layer $l$.
Based on this observation, we can recursively apply the algorithm for the SLP backwards through the layers of the model to recover the neutrality values across those SLPs, from the output layer to the input layer. 
This will provide $\prod_{l=2}^L k_l$ baselines, one for each SLP in the first hidden layer. 
Finally, in order to aggregate these baselines, we define an equivalent sparse representation of a MLP (SparseMLP), which is constructed by concatenating each of the SLPs defined above. 
See Fig.~\ref{fig:SparseMLP} for an example.
This representation allows us to compute the Shapley value for each example-feature pair by using all fair baselines found at once.
\begin{figure*}[!h]
	\centering
	\includegraphics[width=0.9\linewidth,trim=100 60 60 20, clip]{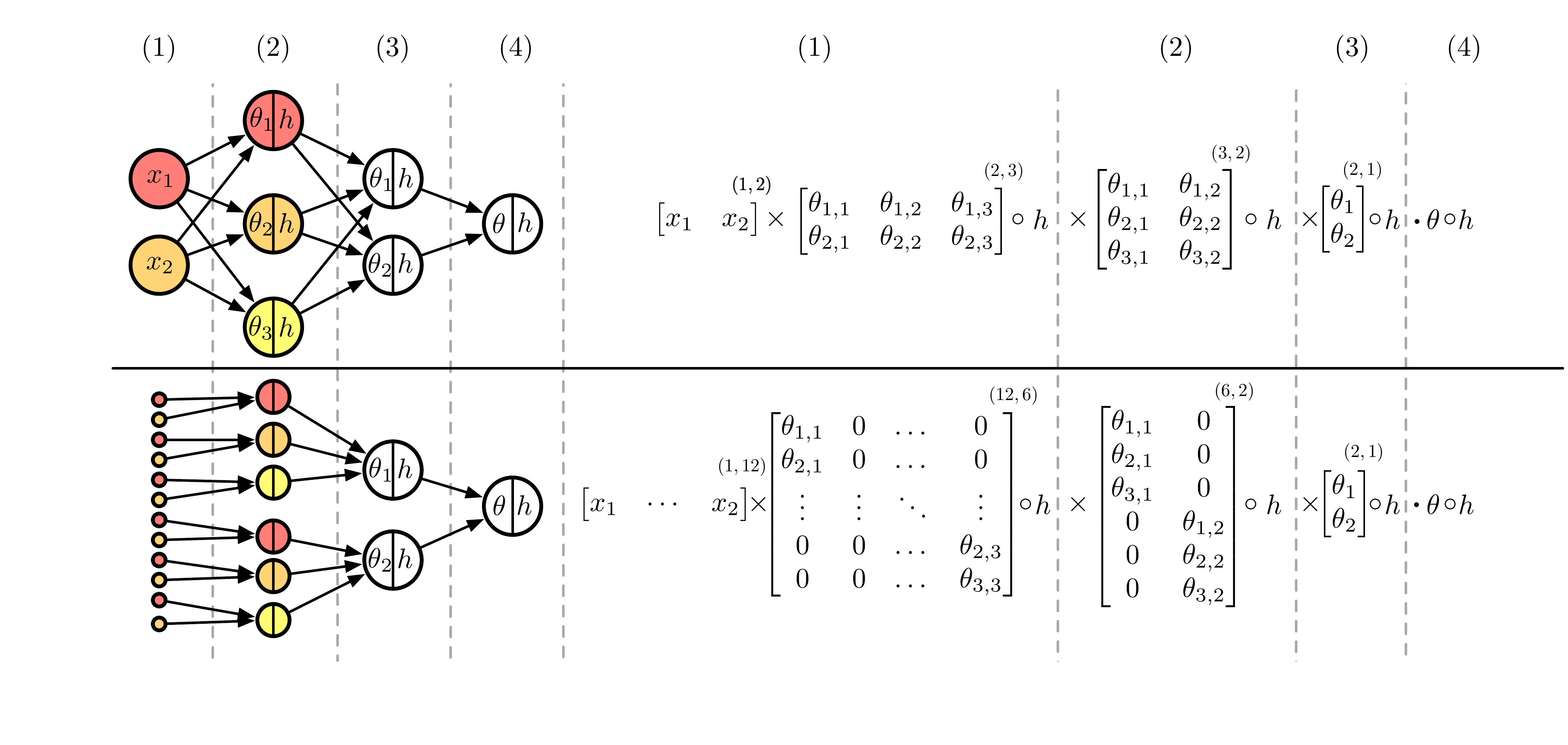}
	\caption{A MLP (above) and its equivalent sparse representation (below).}
	\label{fig:SparseMLP}
\end{figure*}

	\section{Experiments}
	\label{sec:experiments}
	We evaluate in classification tasks the explainability power of our baseline method (\textbf{neutral}$_\alpha$) against the
\textbf{zero}, 
\textbf{average}, 
\textbf{p}$_{X}$, and 
\textbf{mdb} baselines.
The code used to run these experiments is available at the following weblink: \url{https://github.com/cosimoizzo/Neutral-Baseline-For-Shapley-Values}.
Since the output of the trained classifier is probabilistic, i.e., its codomain is in $[0, 1]$ we set the neutrality value $\alpha$ (and so the decision boundary) to $0.5$.
We use two datasets, a synthetic and a real one. The former to simulate a dataset with independent features and controlled feature importance. The latter to experiment with a real use case.

For the synthetic dataset, we generate 5 independent features from a multivariate normal distribution with 0 mean and unit variance. The importance and sign of each feature are randomly drawn and the partition of the space in two classes is nonlinear. We repeat this 100 times, thus generating 100 synthetic datasets. 

The real dataset is about default of credit card clients \cite{yeh2009comparisons}. The dataset contains one binary target variable and 23 features. The number of observations is 29,351. In order to apply ROAR to such dataset, we need to reduce the number of observations to at least 300. We do so by sampling these observations while keeping the two classes balanced. Additionally, to further reduce the computational cost we use Shapley sampling \cite{castro2009polynomial,okhrati2020multilinear}.
%

We validate on both datasets a MLP with sigmoid activation functions and binary cross-entropy loss, and we use Adam as optimiser. The number of hidden layers and neurons in each layer are chosen via a Monte Carlo sampling of models using the training and validation sets.
%
\begin{figure}[!h]
	\centering
	\begin{subfigure}{1\textwidth}
		\centering
		\includegraphics[width=1.25\linewidth]{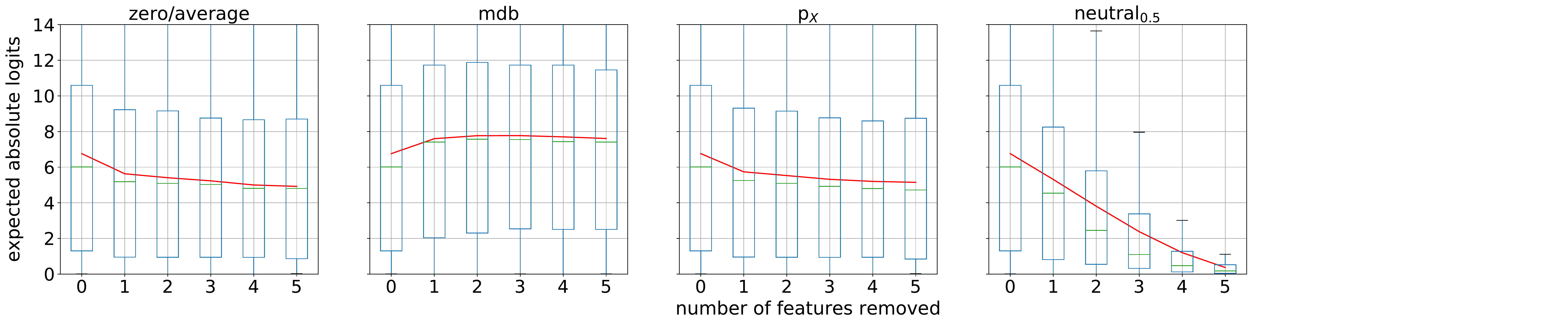}
		\caption{Information content synthetic.}
		\label{fig:local_synth}
	\end{subfigure}%
		\centering
	
	\begin{subfigure}{1\textwidth}
		\centering
		\includegraphics[width=0.55\linewidth]{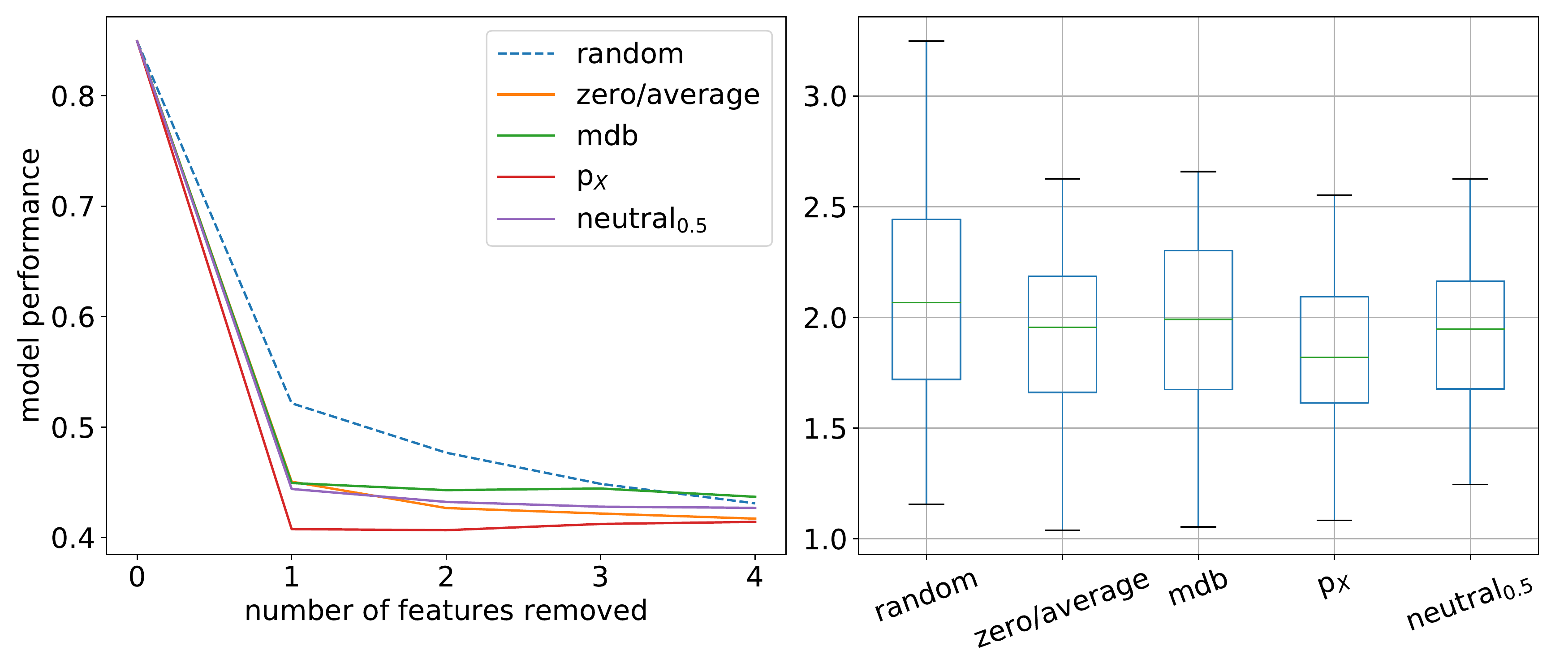}
		\caption{ROAR synthetic.}
		\label{fig:roar_synth}
	\end{subfigure}
	\caption{Perturbation tests on the synthetic data. 
		}
\end{figure}

\begin{figure}[!h]
	\centering
	\begin{subfigure}{1\textwidth}
		\centering
		\includegraphics[width=1.20\linewidth]{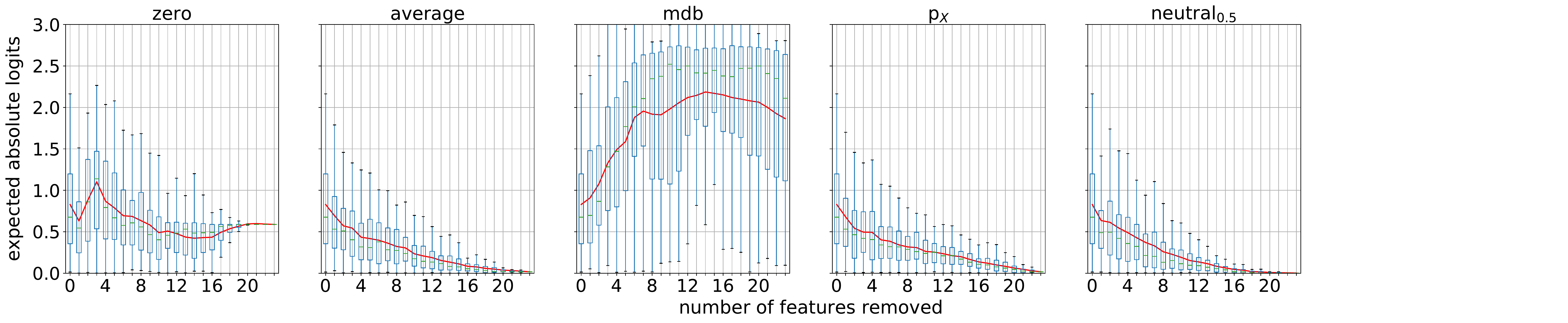}
		\caption{Information content credit card.}
		\label{fig:local_ccd}
	\end{subfigure}%

	\begin{subfigure}{1\textwidth}
		\centering
		\includegraphics[width=0.6\linewidth]{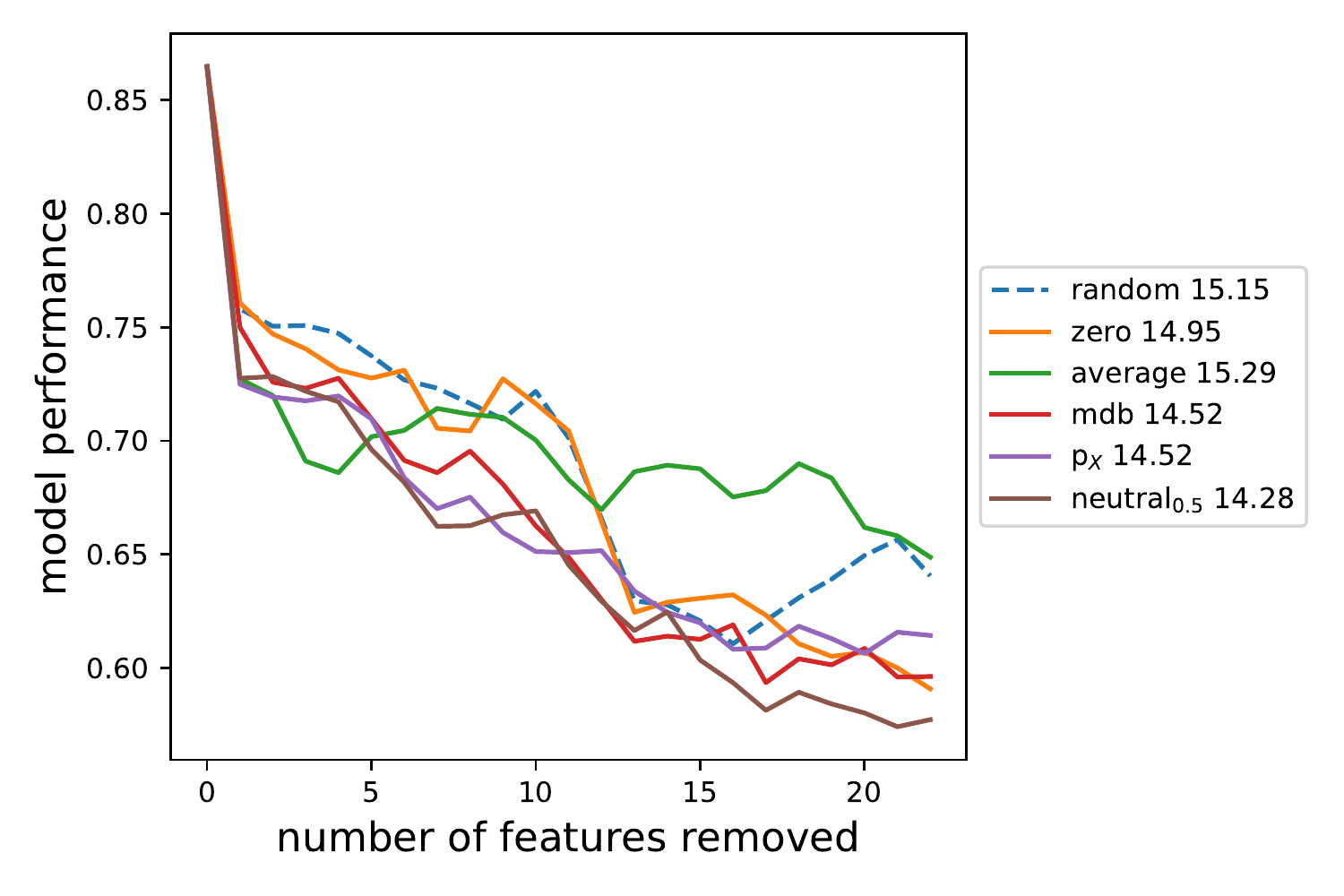}
		\caption{ROAR credit card.}
		\label{fig:roar_ccd}
	\end{subfigure}
	\caption{Perturbation tests on the credit card dataset. 
		}
\end{figure}

Fig.~\ref{fig:local_synth}, \ref{fig:local_ccd} show changes in expected absolute logits. The only baseline that in all datasets guarantees a monotonic decrease to zero in the information content when removing features is the \textbf{neutral}$_\alpha$. Thus, this is the only baseline that ensures convergence to a meaningful value when removing features.
Fig.~\ref{fig:roar_synth}, \ref{fig:roar_ccd} show ROAR scores. While in the synthetic dataset \textbf{p}$_{X}$ achieves the best score followed by \textbf{zero} and \textbf{neutral}$_\alpha$, in the real dataset, it is the \textbf{neutral}$_\alpha$ baseline that achieves the best score. Since \textbf{p}$_{X}$ and \textbf{neutral}$_\alpha$ show similar ROAR scores, the two approaches do equally well in ranking features in order of importance.

	\section{Conclusion}
	\label{sec:conclusions}
	In this work, we have investigated the identification of baselines for Shapley values based attribution methods and MLPs.
We have introduced the concept of neutrality and fair baselines. Their combination has allowed us to develop a neutral baseline that provides direct interpretation of the Shapley values, being them calculated in relation to the decision threshold of the model. This is in contrast to the baseline methods, where explanations are provided with respect to some arbitrary value with no direct relation to the model.
%
Nevertheless, the computational cost of searching the \textbf{neutral}$_\alpha$ baseline increases exponentially with respect to the number of hidden layers.
Further, we did not analyse how to apply such method to recurrent networks and how to extend it to regression problems which we leave to future work.
%
	
	
	\begin{footnotesize}

        \bibliographystyle{unsrtnat}
		\bibliography{ms}
		
	\end{footnotesize}
	
	
\end{document}